\theoremstyle{plain}
\newtheorem{theorem}{Theorem}[section]
\theoremstyle{definition}
\newtheorem{definition}[theorem]{Definition}
\theoremstyle{remark}
\icmltitlerunning{Directly Denosing Diffusion Models}
\begin{document}

\twocolumn[
\icmltitle{Directly Denoising Diffusion Models}



\icmlsetsymbol{equal}{*}

\begin{icmlauthorlist}
\icmlauthor{Dan Zhang}{yyy,equal}, 
\icmlauthor{Jingjing Wang}{yyy,equal}, 
\icmlauthor{Feng Luo}{yyy}

\end{icmlauthorlist}

\icmlaffiliation{yyy}{School of Computing, Clemson University, USA}

\icmlcorrespondingauthor{Feng Luo}{luofeng@clemson.edu}

\icmlkeywords{Machine Learning, ICML}

\vskip 0.3in
]



\printAffiliationsAndNotice{\icmlEqualContribution} 

\begin{abstract}
In this paper, we present the Directly Denoising Diffusion Model (DDDM): a simple and generic approach for generating realistic images with few-step sampling, while multistep sampling is still preserved for better performance. DDDMs require no delicately designed samplers nor distillation on pre-trained distillation models. DDDMs train the diffusion model conditioned on an estimated target that was generated from previous training iterations of its own. To generate images, samples generated from the previous time step are also taken into consideration, guiding the generation process iteratively. We further propose Pseudo-LPIPS, a novel metric loss that is more robust to various values of hyperparameter. Despite its simplicity, the proposed approach can achieve strong performance in benchmark datasets. Our model achieves FID scores of 2.57 and 2.33 on CIFAR-10 in one-step and two-step sampling respectively, surpassing those obtained from GANs and distillation-based models. By extending the sampling to 1000 steps, we further reduce FID score to 1.79, aligning with state-of-the-art methods in the literature. Our code is available at \url{https://github.com/TheLuoFengLab/DDDM}.

\end{abstract}

\section{Introduction}

Diffusion models (DM) recently have attracted significant attention for their exceptional ability to generate realistic samples in recent years, DMs achieved impressive performance in many fields, including image generation \cite{ADM,glide,unclip,Imagen,latentDiff}, video generation \cite{videodiff}, inpainting \cite{latentDiff} and super-resolution \cite{sr3,latentDiff}. 
However, a remarkable drawback is their relatively slow sampling speed, which poses challenges for practical applications. For instance, the vanilla method of DMs (DDPM \citet{ddpm},  Score based models \cite{score}) takes hundreds to thousands of steps in sampling, which is very time-consuming compared with one-step generation such as GAN-style models \cite{bigGAN,stylegan2-ada}, normalizing flow models \cite{glow,flow}, or consistency models \cite{consistency,iCT}. When directly generating the image using DDPM, the accumulated distortion leads to poor performance.

Many efforts to accelerate the sampling process of DM have been proposed. Denoising Diffusion Implicit Models (DDIM, \citet{ddim}) modified the diffusion process into a non-Markovian format with a smaller number of function evaluations (NFE) to generate samples. 
Meanwhile, by viewing the sampling process through the lens of ordinary differential equation (ODE),  \citet{score} developed faster numerical solvers to reduce the NFE required for generation rapidly, thus speeding up the process significantly \cite{3deis,dpm-solver}. 
Although these solvers can achieve comparable results as thousands-of-step samplers, the performance for single-step generation is still not good. Furthermore, knowledge distillation-based methods \cite{knowledge} compress the information learned by the thousands-step sampler into a one-step model, which enables the one-step generation of samples. However, the distillation-based models add computational overhead to the training process as they require another pretrained diffusion model (teacher model) and have potential architectural constraints \cite{iCT}.

In this paper, we propose Directly Denoising Diffusion Model (DDDM) that combines the efficiency of single-step generation with the benefits of iterative sampling for improved sample quality. DDDM employs a DDPM-style noise scheduler and denoises under the probability flow ODE framework. However, we solve the probability flow ODE using a neural network only without using any ODE solver. Our method enables the generation of data samples from random noise with high quality in just one step. Moreover, DDDM can still allow multi-step sampling to obtain better generation results. Furthermore, inspired by Pseudo-Huber losses, we proposed pseudo Learned Perceptual Image Patch Similarity (LPIPS) \cite{lpips}, which shows more robustness in our study. 

 In our experiments, we demonstrate the effectiveness of DDDMs across various image datasets including CIFAR-10 \cite{CIFAR}, and ImageNet 64x64 \cite{imagenet}, and observe comparable results to current state-of-the-art methods. Our model achieves FID scores of 2.57 and 2.33 on CIFAR-10 in one-step and two-step sampling respectively. By extending the sampling to 1000 steps, we further reduce FID score to 1.79. 

Our contributions can be summarized as follows:
\begin{itemize}
    \item We introduce the Directly Denoising Diffusion Models (DDDM) that can obtain the performance for image generation results that is comparable to current state-of-the-art methods and can obtain better generation results for multiple-step sampling.   
    \item Our model provides a straightforward pass with much fewer constraints and does not need ODE solvers.
    \item We proposed the Pseudo-LPIPS metric with increased sensitivity when the loss gets smaller, which is more robust.
\end{itemize}

\section{Preliminary}
\subsection{Diffusion models}
Inspired by non-equilibrium thermodynamics, diffusion models \cite{firstdiff,ddpm,score}, present a generative framework that models data from an unknown true distribution $p_{\text{data}}(\mathbf{x})$. These models consist of two processes: a forward diffusion process and a reverse denoising process.

The forward diffusion process is characterized by the gradual introduction of noise into the original data, denoted as $\mathbf{x}_0$, over a sequence of time steps from 0 to $T$. This process is mathematically structured as a Markov chain, where Gaussian noise is incrementally added to the data at each step. At time step $t$, the distribution of $\mathbf{x}_t$ condition on $\mathbf{x}_{t-1}$ can be expressed as:

\begin{equation*}
    q\left(\mathbf{x}_t \mid \mathbf{x}_{t-1}\right)=\mathcal{N}(\mathbf{x}_t ; \sqrt{1-\beta_t} \mathbf{x}_{t-1}, \beta_t \mathbf{I})
\end{equation*}

By the property of Markov chain, the distribution of $\mathbf{x}_t$ given $\mathbf{x}_0$ is described as \cite{ddpm}:
$$
\left.q\left(\mathbf{x}_t \mid \mathbf{x}_0\right)=\mathcal{N}\left(\mathbf{x}_t ; \sqrt{\bar{\alpha}_t} \mathbf{x}_0,\left(1-\bar{\alpha}_t\right) \mathbf{I}\right)\right)
$$
where $\bar{\alpha}_t=\prod_{s=1}^t\left(1-\beta_s\right)$, $q\left(\mathbf{x}_t \mid \mathbf{x}_0\right)$ is also known as diffusion kernel.

The reverse denoising process aims to learn the inverse of the forward diffusion. Starting from a random sample $\mathbf{x}_T$ with distribution $p\left(\mathbf{x}_T\right)=\mathcal{N}\left(\mathbf{x}_T; \mathbf{0}, \mathbf{I}\right)$, this sampled data is then progressively denoised through a neural network that parameterizes a conditional distribution $q\left(\mathbf{x}_s \mid \mathbf{x}_t\right)$, where $s<t$. This denoising process continues step by step, moving backward in time from step $T$ towards step $0$. The sequence of denoising steps gradually reconstructs the data, aiming to approximate the original data as closely as possible when the time step $0$ is reached.

\subsection{Stochastic Differential Equation Formulation}
The discrete processes in DDPM can be linked to continuous-time diffusion processes \cite{ddim}. By obtaining a continuous approximation of the forward discrete process, we can align it with a Stochastic Differential Equation (SDE) and consequently derive a reverse continuous-time process that corresponds with the reverse discrete process defined in diffusion models.

For the diffusion kernels $\left\{p_{\beta_i}\left(\mathbf{x} \mid \mathbf{x}_0\right)\right\}_{i=1}^N$ used in DDPM, we have:
$$
\mathbf{x}_i=\sqrt{1-\beta_i} \mathbf{x}_{t-1}+\sqrt{\beta_t} \mathbf{z}_{t-1},  \mathbf{z}_{t-1} \sim \mathcal{N}(\mathbf{0}, \mathbf{I}), 
$$
where $t=1, \cdots, T$ and $\beta_t$ can be approximated to an infinitesimal function $\beta(t) \Delta t$ as $ T \rightarrow \infty,$ and $\beta_t$ sufficiently small. 

Applying Taylor expansion, the following can be derived:
$$\mathbf{x}_t \approx \mathbf{x}_{t-1}-\frac{\beta(t) \Delta t}{2} \mathbf{x}_{t-1}+\sqrt{\beta(t) \Delta t} \mathbf{z}_{i-1}.$$

As the time increment $\Delta t \rightarrow 0$, the above discrete function transitions into the following Variance Preserving (VP) Stochastic Differential Equation (SDE) \cite{ddim}:

$$
\mathrm{d} \mathbf{x}_t=-\frac{1}{2} \beta(t) \mathbf{x}_t \mathrm{d} t+\sqrt{\beta(t)} \mathrm{d} \mathbf{w},
$$
where $\mathbf{w}$ is a Wiener process.

The reverse process for this VP SDE is defined as:
$$
\mathrm{d} \mathbf{x}_t=\left(-\frac{1}{2} \beta(t) \mathbf{x}_t -\beta(t) \nabla_x \log q(\mathbf{x}_t)\right) \mathrm{d} t+\sqrt{\beta(t)} \mathrm{d} \mathbf{w}
$$

The Generative probability flow Ordinary Differential Equation (ODE), which is deterministic, can be expressed as:
\begin{equation}
    \frac{\mathrm{d} \mathbf{x}_t}{\mathrm{~d} t}=-\frac{1}{2} \beta(t)\left[\mathbf{x}_t-\nabla_{x_t} \log q_t\left(\mathbf{x}_t\right)\right] .
    \label{equ:ode}
\end{equation}

By replacing $\nabla_{\boldsymbol{x}} \log q_t\left(\boldsymbol{x}_t\right)$ with a neural network based estimation $\mathbf{s}_{\boldsymbol{\theta}}\left(\mathbf{x}_t, t\right)$, \citet{score}  obtained the neural ODE:
$$\mathrm{d} \mathbf{x}_t=-\frac{1}{2} \beta(t)\left[\mathbf{x}_t+\mathbf{s}_{\boldsymbol{\theta}}\left(\mathbf{x}_t, t\right)\right] \mathrm{d} t $$

Advanced ODE solvers can be applied to solve the above equations. 


\begin{figure*}
    \centering
    \includegraphics[scale=0.5]{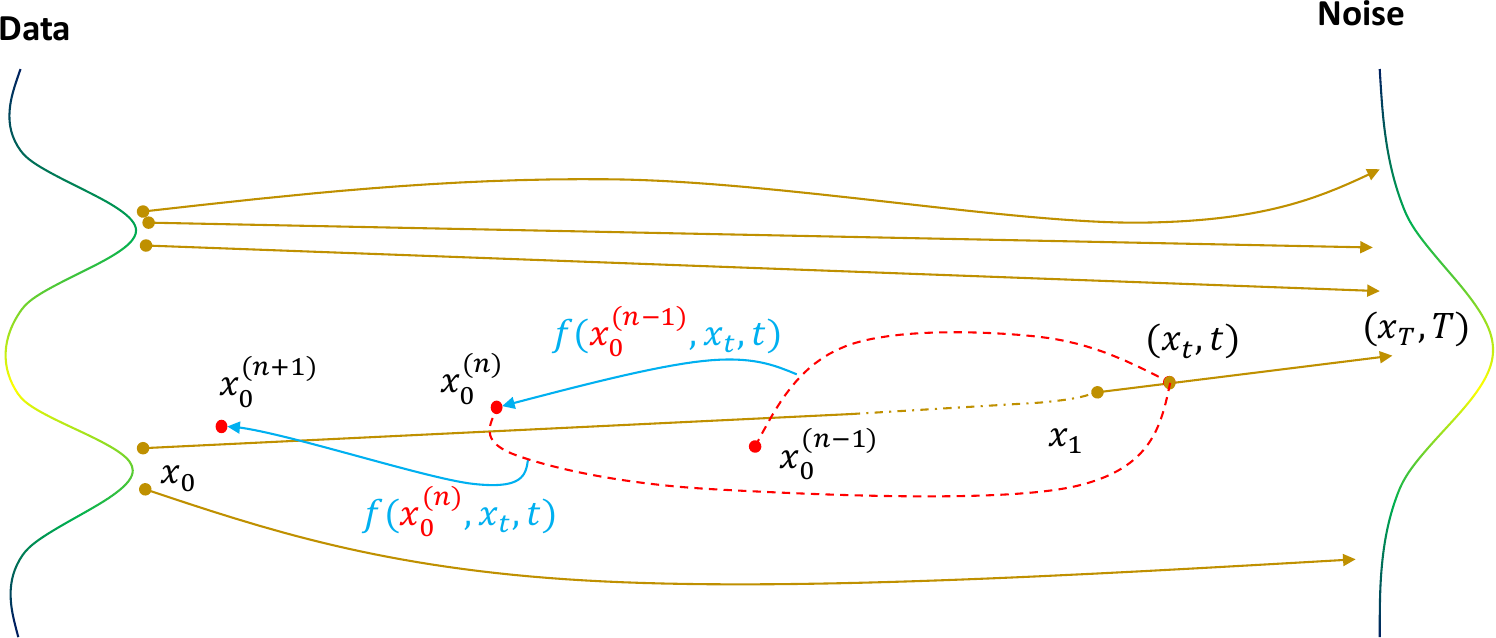}
    \caption{An illustration of DDDM. For current training epoch $n$, our model takes noisy data $\mathbf{x}_t$ and timestep $t$, as well as the estimated target from previous epoch $\mathbf{x}_0^{(n-1)}$ as inputs, predicts the new approximation $\mathbf{x}_0^{(n)}$, which will be utilized in the next training epoch. Through such an iterative process, our approximated result moves gradually towards real data $\mathbf{x}_0$.}
    \label{fig:arch}
\end{figure*}

\section{Directly Denosing Diffusion Models}


Solving the probability flow (PF) ODE is equivalent to computing the following integral: 

$\int_T^0 \frac{\mathrm{d} \mathbf{x}_t}{\mathrm{~d} t} \mathrm{~d} t=\int_T^0 -\frac{1}{2} \beta(t)\left[\mathbf{x}_t-\nabla_{x_t} \log q_t\left(\mathbf{x}_t\right)\right] \mathrm{~d} t \Longleftrightarrow \mathbf{x}_0=\mathbf{x}_T+\int_T^0 -\frac{1}{2} \beta(t)\left[\mathbf{x}_t-\nabla_{x_t} \log q_t\left(\mathbf{x}_t\right)\right]  \mathrm{~d} t$

where $\mathbf{x}_T$ is initialized from a normal distribution  $\mathcal{N}(\mathbf{0}, \mathbf{I})$.

To generate samples from a DM, we propose Directly Denosing Diffusion Models, an iterative process designed to refine the estimation of $\mathbf{x}_0$.
First, we define $\mathbf{f}\left(\mathbf{x}_0, \mathbf{x}_t, t\right)$
as the solution of the PF ODE from initial time $t$ to final time $0$ (Appendix \ref{appen:1}):
$$\mathbf{f}\left(\mathbf{x}_0, \mathbf{x}_t, t\right) := \mathbf{x}_t+\int_t^0-\frac{1}{2} \beta(s) \left[\mathbf{x}_s- \nabla_{x_s} \log q_s\left(\mathbf{x}_s\right)\right] \mathrm{d} s $$
 where $\mathbf{x}_t$ is drawn from  $\mathcal{N}\left(\sqrt{\bar{\alpha}_t} \mathbf{x}_0,\left(1-\bar{\alpha}_t\right) \mathbf{I}\right)$. Subsequently, we introduce the function $\mathbf{F}\left(\mathbf{x}_0, \mathbf{x}_t, t\right)$ defined as:
$$\mathbf{F}\left(\mathbf{x}_0, \mathbf{x}_t, t\right) :=\int_t^0\frac{1}{2} \beta(s) \left[\mathbf{x}_s- \nabla_{x_s} \log q_s\left(\mathbf{x}_s\right)\right] \mathrm{d} s$$
Thus, we have:
\begin{equation}
\mathbf{f}\left(\mathbf{x}_0, \mathbf{x}_t, t\right) = \mathbf{x}_t - \mathbf{F}\left(\mathbf{x}_0, \mathbf{x}_t, t\right)
 \label{equ:1}
\end{equation}
By approximate $\mathbf{f}$, we can then recover the original image $\mathbf{x}_0$. We define a neural network-parameterized function $\mathbf{f}_{\boldsymbol{\theta}}$, which is employed to estimate the solution of the PF ODE and thereby recover the original image state at time 0. The predictive model is represented as:
\begin{equation}
   \mathbf{f}_{\boldsymbol{\theta}}\left(\mathbf{x}_0, \mathbf{x}_t, t\right)=\mathbf{x}_t-\mathbf{F}_{\boldsymbol{\theta}}\left(\mathbf{x}_0, \mathbf{x}_t, t\right)
 \label{equ:1}
\end{equation}

where $\mathbf{F}_{\boldsymbol{\theta}}$ is the neural network function parameterized by the weights $\boldsymbol{\theta}$. To achieve a good recovery of the initial state $\mathbf{x}_0$, $\mathbf{f}_{\boldsymbol{\theta}}\left(\mathbf{x}_0, \mathbf{x}_t, t\right)\approx \mathbf{f}\left(\mathbf{x}_0, \mathbf{x}_t, t\right)$ need to be ensured.

\subsection{Iterative solution} Eq. \eqref{equ:1} shows that our neural network $\mathbf{F}_{\boldsymbol{\theta}}$ requires $\mathbf{x}_0$ as input, which is not applicable during sample generation. To unify the training and inference within the same framework, we propose an iterative update rule to estimate the initial state $\mathbf{x}_0$ in a dynamic system. This iterative process is formally defined by the following update equation:
\begin{equation}
\mathbf{x}_0^{(n+1)}=\mathbf{x}_t-\mathbf{F}_{\boldsymbol{\theta}}\left(\mathbf{x}_0^{(n)}, \mathbf{x}_t, t\right) 
\end{equation}  
where $\mathbf{x}_0^{(n)}$ denotes the estimated ground truth data $\mathbf{x}_0$ at the $n$-th training epoch or $n$-th sampling iteration. Each update refines this estimate in an attempt to converge to the true initial state. To effectively quantify the discrepancy between the $n$-th estimate $\mathbf{x}_0^{(n)}$ and the true initial state $\mathbf{x}_0$ in the DDDM, we employ the following loss function. 






\begin{definition}
The loss function of the DDDM at the $n$-th iteration is defined as:
\begin{equation}
 \label{eq:loss}
  \begin{aligned}
\mathcal{L}_{\text{DDDM}}^{(n)}(\boldsymbol{\theta}):= &  \mathbb{E}_{t \sim \mathcal{U}[1,T]}[\mathbb{E}_{\mathbf{x}_0 \sim p_{\text {data }}(\mathbf{x}_0)}\\& 
[\mathbb{E}_{\mathbf{x}_t \sim \mathcal{N}(\sqrt{\bar{\alpha}_t} \mathbf{x}_0,(1-\bar{\alpha}_t) \mathbf{I})} \\& 
[d (\mathbf{f}_{\boldsymbol{\theta}}(\mathbf{x}_0^{(n)}, \mathbf{x}_t, t), \mathbf{x}_0)]]]
\end{aligned}  
\end{equation}

where $\mathcal{U}\left[1, T\right]$ denotes a uniform distribution over the integer set $\left[1,2, \cdots, T\right]$. $d(\cdot, \cdot)$ is a metric function satisfies that for all vectors $\mathbf{x}$ and $\mathbf{y}$, $d(\mathbf{x}, \mathbf{y}) \geq 0$ and $d(\mathbf{x}, \mathbf{y})=0$ if and only if $\mathbf{x}=\mathbf{y}$. Therefore, commonly used metrics such as $L_1$, $L_2$ can be utilized. We will discuss our choice of $d(\cdot, \cdot)$ in Section \ref{sec:loss}.

\end{definition}
This definition encapsulates the expected discrepancy between the estimated state $\mathbf{x}_0^{(n)}$ and the true initial state $\mathbf{x}_0$, integrated over a probabilistic model of the data and the time domain.

\textbf{Training}. Each data sample $\mathbf{x}_0$ is chosen randomly from the dataset, following the probability distribution $p_{\text {data }}(\mathbf{x}_0)$. This initial data point forms the basis for generating a trajectory. Next, we randomly sample a timestep $t \sim \mathcal{U}[1, T]$, and obtain 
its noisy variant $\mathbf{x}_t$ from distribution $\mathcal{N}\left(\sqrt{\bar{\alpha}_t} \mathbf{x}_0,\left(1-\bar{\alpha}_t\right) \mathbf{I}\right)$. we play the reparameterization trick to rewrite $\mathbf{x}_t = \sqrt{\bar{\alpha}_t} \mathbf{x}_0 + \sqrt{1-\bar{\alpha}_t}\boldsymbol{\epsilon}, \boldsymbol{\epsilon} \sim \mathcal{N}(\mathbf{0}, \mathbf{I})$. For current training epoch $n$, our model takes noisy data $\textbf{x}_t$ and timestep $t$, as well as the corresponding estimated target from previous epoch $\textbf{x}_0^{(n-1)}$ as inputs, predicts a new approximation $\textbf{x}_0^{(n)}$, which will be utilized in the next training epoch for the same target sample. DDDM is trained by minimizing the loss following Eq. \ref{eq:loss}.
 The full procedure of training DDDM is summarized in Algorithm \ref{algo:1}.

\textbf{Sampling}. The generation of samples is facilitated through the use of a well-trained DDDM, denoted as $\boldsymbol{f}_{\boldsymbol{\theta}}(\cdot, \cdot)$. The process begins by drawing from the initial Gaussian distribution, where both $\mathbf{x}_0^{(0)}$ and $\mathbf{x}_T$ are sampled from $\mathcal{N}\left(\mathbf{0}, \boldsymbol{I}\right)$. Subsequently, these noise vectors and embedding of $T$ are passed through the DDDM model to obtain $\mathbf{x}_0^{\text{est}}=
\mathbf{f}_{\boldsymbol{\theta}}\left(\mathbf{x}_0^{(0)}, \mathbf{x}_T, T\right)
$. This approach is noteworthy for its efficiency, as it requires only a single forward pass through the model. Our model also supports a multistep sampling procedure for enhanced sample quality. Detail can be found in Algorithm \ref{algo:2}.

\begin{algorithm}[t]

   \caption{Training}
\begin{algorithmic}
   \STATE {\bfseries Input:} image dataset $D,$ $T$, model parameter $\boldsymbol\theta$ \\
    initialize $\mathbf{x}_0^{(0)} \sim \mathcal{N}(\mathbf{0}, \mathbf{I})$,
    epoch $n \leftarrow 0$
   \REPEAT
   \STATE Sample $\mathbf{x}_0 \sim D$ and $t \sim \mathcal{U}\left[1, T\right]$.
   \STATE Sample $\boldsymbol{\epsilon} \sim \mathcal{N}(\mathbf{0}, \mathbf{I})$
   \STATE $\mathbf{x}_t = \sqrt{\bar{\alpha}_t} \mathbf{x}_0+\sqrt{1-{\bar\alpha}_t} \boldsymbol{\epsilon} $
   \STATE$\mathbf{x}_0^{(n+1)} \leftarrow \mathbf{x}_t-\mathbf{F}_{\boldsymbol\theta}\left(\mathbf{x}_{0}^{(n)}, \mathbf{x}_t, t\right)$ 
   \STATE $\mathcal{L}_{\text{DDDM}}^{(n+1)}(\boldsymbol{\theta})\leftarrow d\left(\boldsymbol{f}_{\boldsymbol{\theta}}\left(\mathbf{x}_0^{(n)}, \mathbf{x}_t, t\right), \mathbf{x}_0\right)$
   \STATE $\boldsymbol{\theta} \leftarrow \boldsymbol{\theta}-\eta \nabla_{\boldsymbol{\theta}} \mathcal{L}\left(\boldsymbol{\theta}\right)$
   \STATE $n \leftarrow n+1$

   \UNTIL{convergence}
\end{algorithmic}
\label{algo:1}
\end{algorithm}


Here, we provide theoretical justifications for the convergence of our method below. 

\begin{algorithm}[t]
   \caption{Sampling}
\begin{algorithmic}
   \STATE {\bfseries Input:} $T$, trained model parameter $\boldsymbol\theta$, sampling step $s$
   $\mathbf{x}_0^{(0)} \sim \mathcal{N}(\mathbf{0}, \mathbf{I})$, $\mathbf{x}_T \sim \mathcal{N}(\mathbf{0}, \mathbf{I})$ 
   \FOR{$n=0$ {\bfseries to} $s-1$}
   \STATE $\mathbf{x}_0^{(n+1)} \leftarrow \mathbf{x}_T-\mathbf{F}_{\boldsymbol\theta}\left(\mathbf{x}_{0}^{(n)}, \mathbf{x}_T, T\right)$ 
   \ENDFOR
   \STATE \textbf{Output:} $\mathbf{x}_0^{(n+1)}$
   
\end{algorithmic}
\label{algo:2}
\end{algorithm}

\begin{theorem}\label{th1}
If the loss function 
$\mathcal{L}_{\text{DDDM}}^{(n)}\left(\boldsymbol{\theta}\right)\rightarrow 0 $ as $n\rightarrow \infty$, it can be shown that as $n\rightarrow \infty$,

\begin{equation}\label{eq1}
 \sup_{\mathbf{x}_{0}}\left(\mathbf{f}_{\boldsymbol{\theta}}\left(\mathbf{x}_{0}^{(n)},\mathbf{x}_{t}, t\right)- \mathbf{f}\left(\mathbf{x}_{0},\mathbf{x}_{t}, t\right)\right)\rightarrow 0   
\end{equation}

\end{theorem}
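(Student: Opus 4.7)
The plan is to combine an identification of the ideal target with a standard expectation-to-pointwise argument, and then upgrade pointwise convergence to uniform convergence in $\mathbf{x}_0$.

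First I would use the definition of $\mathbf{f}$ as the solution of the probability flow ODE starting from $\mathbf{x}_t$ and integrated back to time $0$ with the true score (referenced in Appendix~\ref{appen:1}) to establish that $\mathbf{f}(\mathbf{x}_0,\mathbf{x}_t,t)=\mathbf{x}_0$ whenever $\mathbf{x}_t$ is drawn from the diffusion kernel $\mathcal{N}(\sqrt{\bar\alpha_t}\mathbf{x}_0,(1-\bar\alpha_t)\mathbf{I})$. Under this identification, the difference appearing in \eqref{eq1} becomes $\mathbf{f}_{\boldsymbol\theta}(\mathbf{x}_0^{(n)},\mathbf{x}_t,t)-\mathbf{x}_0$, which is precisely the quantity whose $d$-distance is being averaged inside $\mathcal{L}_{\text{DDDM}}^{(n)}$.

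Next I would extract pointwise convergence from the vanishing loss. Because $\mathcal{L}_{\text{DDDM}}^{(n)}(\boldsymbol\theta)$ is the expectation of the non-negative integrand $d(\mathbf{f}_{\boldsymbol\theta}(\mathbf{x}_0^{(n)},\mathbf{x}_t,t),\mathbf{x}_0)$ against a joint law whose marginals charge the full data and forward-kernel supports, its convergence to $0$ forces $L^1$ convergence of the integrand and hence almost-everywhere convergence along a subsequence. Using the defining property $d(\mathbf{x},\mathbf{y})=0\iff\mathbf{x}=\mathbf{y}$ (satisfied by the $L_1$, $L_2$, and Pseudo-LPIPS choices discussed in Section~\ref{sec:loss}), this gives $\mathbf{f}_{\boldsymbol\theta}(\mathbf{x}_0^{(n)},\mathbf{x}_t,t)\to \mathbf{x}_0=\mathbf{f}(\mathbf{x}_0,\mathbf{x}_t,t)$ for almost every $(\mathbf{x}_0,\mathbf{x}_t,t)$.

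The hard part is promoting this pointwise-in-$\mathbf{x}_0$ statement to the $\sup_{\mathbf{x}_0}$ in \eqref{eq1}. I would invoke the implicit assumption that images live in a compact set such as $[0,1]^d$, together with continuity (or Lipschitz regularity) of $\mathbf{F}_{\boldsymbol\theta}$ and of the true score, so that $\mathbf{f}_{\boldsymbol\theta}(\,\cdot\,,\mathbf{x}_t,t)$ and $\mathbf{f}(\,\cdot\,,\mathbf{x}_t,t)$ form an equicontinuous family on a compact set; a Dini-type or Arzelà--Ascoli-style argument would then lift pointwise to uniform convergence. This is where I expect the main obstacle, since the iterate $\mathbf{x}_0^{(n)}$ depends on $\mathbf{x}_0$ through the entire optimization history in a way that is not a priori equicontinuous, and making the uniformity rigorous requires non-trivial regularity assumptions on $\mathbf{F}_{\boldsymbol\theta}$ (e.g.\ uniform Lipschitz bounds across training iterates) that the excerpt leaves implicit; most of the work in a careful write-up would go into formalizing and discharging these assumptions.
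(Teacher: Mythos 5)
Your proposal follows essentially the same route as the paper's proof: implicitly identify $\mathbf{f}(\mathbf{x}_0,\mathbf{x}_t,t)=\mathbf{x}_0$, pass from the vanishing expectation to pointwise convergence using positivity of the relevant densities together with the identity-of-indiscernibles property of $d$, and then take the supremum over $\mathbf{x}_0$. The one place you go beyond the paper is the last step: the paper simply asserts that pointwise convergence implies convergence of $\sup_{\mathbf{x}_0}$, whereas you correctly flag that this promotion requires compactness of the data support and some uniform regularity (e.g.\ equicontinuity or a uniform Lipschitz bound on $\mathbf{f}_{\boldsymbol{\theta}}$ across iterates) --- conditions the paper only introduces later, as hypotheses (i) and (ii) of its subsequent theorem, and never invokes in this proof. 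So your concern about the pointwise-to-uniform step is well-founded; the paper's own write-up leaves exactly that gap, and your sketch of how to close it (compact support plus an Arzel\`a--Ascoli or Dini-type argument) is the natural repair.
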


\begin{proof}
As $n\rightarrow \infty, \mathcal{L}_{\text{DDDM}}^{(n)}\left(\boldsymbol{\theta}\right)\rightarrow 0 $, we have
$$
 \mathbb{E}\left[d\left(\mathbf{f}_{\boldsymbol{\theta}}\left(\mathbf{x}_{0}^{(n)},\mathbf{x}_{t}, t\right), \mathbf{x}_0\right)\right]\rightarrow 0
$$

According to the definition, we have 

$p\left(\mathbf{x}_{t}\right)> 0$ for every $\mathbf{x}_{t}$ and $1 \leqslant t \leqslant T$. Therefore, we have:
$$
d\left(\mathbf{f}_{\boldsymbol{\theta}}\left(\mathbf{x}_{0}^{(n)},\mathbf{x}_{t}, t\right), \mathbf{x}_0\right)\rightarrow 0 
$$

Because $d(\mathbf{x}, \mathbf{y})=0  \text{ if and only if } \mathbf{x}=\mathbf{y}$, this indicates that:

for any $\mathbf{x}_0$ sampled from $p_{\text {data}}(\mathbf{x}_0)$,
$$
\mathbf{f}_{\boldsymbol{\theta}}\left(\mathbf{x}_0^{(n)}, \mathbf{x}_t, t\right) \rightarrow \mathbf{x}_0 \text{, as }  n \rightarrow \infty,
$$
which implies:
$$\sup _{\mathbf{x}_0} d\left(\mathbf{f}_{\boldsymbol{\theta}}\left(\mathbf{x}_0^{(n)}, \mathbf{x}_t, t\right)-\mathbf{f}\left(\mathbf{x}_0, \mathbf{x}_t, t\right)\right) \rightarrow 0 .$$
\end{proof}

\begin{figure*}[t!]
 \centering
\begin{subfigure}[t]{0.48\textwidth}
  \includegraphics[scale=0.5]{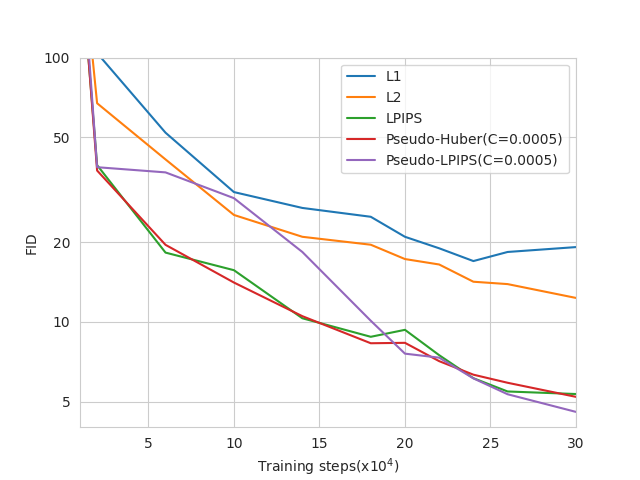}  
  \caption{}
  \label{fig:fid_all}
\end{subfigure}
\begin{subfigure}[t]{0.48\textwidth}
  \includegraphics[scale=0.5]{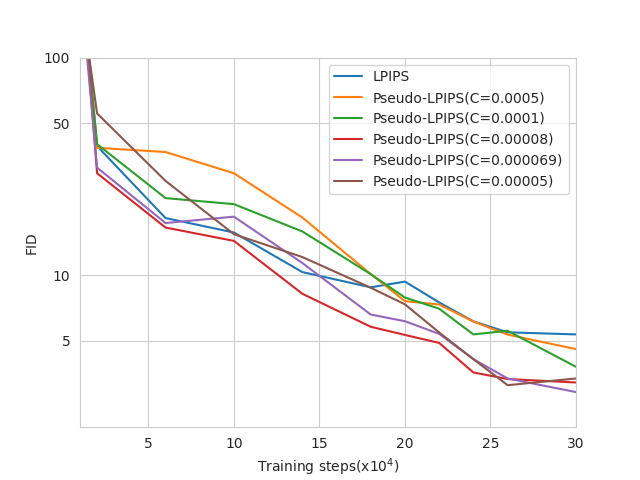}  
  \caption{}
  \label{fig:fid_c}
\end{subfigure}
\caption{ Ablation analysis for our proposed Pseudo-LPIPS metric. (a) While LPIPS and Pseudo-Huber perform closely, Pseudo-LPIPS further reduces FID to under 5. (b) Pseudo-LPIPS outperforms LPIPS with various values of hyperparameter $c$, where $c=0.000069$ is the best. The y-axis for both figures is scaled logarithmically for better visualization.}
\label{fig:fid}
\end{figure*}

The following theorem draws inspiration from Proposition 3 in \citet{kim2023consistency}, leading us to a similar conclusion.
\begin{theorem}
Suppose the following conditions are met:
 
(i) For all $\mathbf{x}, \mathbf{y} \in \mathbb{R}^D$, time $t \in\mathcal{U}[1, T]$, $\mathbf{x}_t\sim \mathcal{N}\left(\sqrt{\bar{\alpha}_t} \mathbf{x}_0,\left(1-\bar{\alpha}_t\right) \mathbf{I}\right)$, $\mathbf{y}_t \sim \mathcal{N}\left(\sqrt{\bar{\alpha}_t} \mathbf{y}_0,\left(1-\bar{\alpha}_t\right) \mathbf{I}\right)$, the function $\mathbf{f}_{\boldsymbol{\theta}}$ satisfies the uniform Lipschitz condition:
$$
\sup _{\boldsymbol{\theta}}\left\|\mathbf{f}_{\boldsymbol{\theta}}\left(\mathbf{x}, \mathbf{x}_t, t\right)-\mathbf{f}_{\boldsymbol{\theta}}\left(\mathbf{y}, \mathbf{y}_t, t\right)\right\|_2 \leq L\|\mathbf{x}-\mathbf{y}\|_2
$$
where $L$ is a Lipschitz constant.

(ii) There exists a non-negative function $L(\mathbf{x})$  such that for all $\mathbf{x} \in \mathbb{R}^D$, time $t \in\mathcal{U}[1, T]$ and $\mathbf{x}_t \sim \mathcal{N}\left( \sqrt{\bar{\alpha}_t} \mathbf{x},\left(1-\bar{\alpha}_t\right) \mathbf{I}\right)$, the function $\mathbf{f}_{\boldsymbol{\theta}}$ is uniformly bounded in $\boldsymbol{\theta}$ :
$$
\sup _{\boldsymbol{\theta}}\left\|\mathbf{f}_{\boldsymbol{\theta}}\left(\mathbf{x}, \mathbf{x}_t, t\right)\right\|_2 \leq L(\mathbf{x})<\infty \text {. }
$$ 
If there exists a $\boldsymbol{\Tilde{{\theta}}}$ such that the loss function 
$\mathcal{L}_{\text{DDDM}}\left(\boldsymbol{\Tilde{{\theta}}}\right)\rightarrow 0$ as the iteration number $n$ becomes sufficiently large. Let $p_{\boldsymbol{\Tilde{{\theta}}}}(\cdot)$ denote the pushforward distribution of $p_T$ induced by $\mathbf{f}_{\boldsymbol{\Tilde{{\theta}}}}(\cdot, \mathbf{x}_T, T)$.

Then, under these conditions, the discrepancy between pushforward distribution $p_{\boldsymbol{\Tilde{{\theta}}}}(\cdot)$ and the data distribution $p_{\text {data }}(\cdot)$
measured in the uniform norm, converges to $0$ as the number of iterations $n$ approaches infinity:
$\left\|p_{\boldsymbol{\Tilde{{\theta}}}}(\cdot)-p_{\text {data }}(\cdot)\right\|_{\infty} \rightarrow 0$ as iteration $n \rightarrow \infty$.

\end{theorem}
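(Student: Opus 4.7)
The plan is to combine the functional-level convergence already supplied by Theorem~\ref{th1} with the Lipschitz regularity in (i) and the dominating envelope in (ii) to deduce the uniform-norm convergence of the pushforward densities. The starting point is that $\mathcal{L}_{\text{DDDM}}^{(n)}(\boldsymbol{\Tilde{{\theta}}})\to 0$ gives, via Theorem~\ref{th1}, $\sup_{\mathbf{x}_0}\|\mathbf{f}_{\boldsymbol{\Tilde{{\theta}}}}(\mathbf{x}_0^{(n)},\mathbf{x}_t,t)-\mathbf{f}(\mathbf{x}_0,\mathbf{x}_t,t)\|\to 0$. I would specialize this to $t=T$, where the diffusion kernel $\mathcal{N}(\sqrt{\bar{\alpha}_T}\mathbf{x}_0,(1-\bar{\alpha}_T)\mathbf{I})$ is essentially $p_T\approx\mathcal{N}(\mathbf{0},\mathbf{I})$ independent of $\mathbf{x}_0$, so that for $\mathbf{x}_T\sim p_T$ the iterate $\mathbf{f}_{\boldsymbol{\Tilde{{\theta}}}}(\mathbf{x}_0^{(n)},\mathbf{x}_T,T)$ converges pointwise to the true preimage $\mathbf{x}_0$ from which $\mathbf{x}_T$ was generated under the forward diffusion.

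Weak convergence $p_{\boldsymbol{\Tilde{{\theta}}}}^{(n)}\Rightarrow p_{\text{data}}$ then follows by testing against any bounded continuous $\varphi$: the uniform envelope from (ii) (together with the trivial bound $\|\varphi\|_\infty$) justifies dominated convergence, yielding $\mathbb{E}[\varphi(\mathbf{f}_{\boldsymbol{\Tilde{{\theta}}}}(\mathbf{x}_0^{(n)},\mathbf{X},T))]\to\int\varphi\,dp_{\text{data}}$ for $\mathbf{X}\sim p_T$. To upgrade this to the $\|\cdot\|_\infty$ convergence of densities asserted by the theorem, I would invoke condition (i): the uniform Lipschitz bound $L$ on $\mathbf{f}_{\boldsymbol{\Tilde{{\theta}}}}$, combined with the smoothness and boundedness of the Gaussian $p_T$, forces the family $\{p_{\boldsymbol{\Tilde{{\theta}}}}^{(n)}\}$ to be equicontinuous with a common modulus; condition (ii) provides a uniform tail bound on the same family. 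Equicontinuity plus the pointwise convergence of densities (obtained from Step~1 via a change-of-variables argument) gives uniform convergence on compact sets, and the uniform tail control extends this to the full space, yielding $\|p_{\boldsymbol{\Tilde{{\theta}}}}^{(n)}-p_{\text{data}}\|_\infty\to 0$.

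The main obstacle will be the last step: the map $\mathbf{f}_{\boldsymbol{\Tilde{{\theta}}}}(\cdot,\mathbf{x}_T,T)$ is not guaranteed to be a diffeomorphism, so a naive change-of-variables for the pushforward density does not apply directly. The cleanest workaround is to regularize by a small Gaussian convolution, apply the change-of-variables and equicontinuity arguments above to the smoothed densities (where the convolution structure makes everything well-defined), and then remove the mollification at the end by using the uniform Lipschitz bound from (i) together with (ii) to ensure that the smoothing error tends to zero independently of $n$. This way Steps~1--2 transfer cleanly to the unsmoothed densities, giving the desired uniform-norm limit.
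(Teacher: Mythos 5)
Your first half tracks the paper exactly: both you and the authors specialize Theorem~\ref{th1} to $t=T$ to obtain $\sup_{\mathbf{x}_0}\|\mathbf{f}_{\boldsymbol{\tilde\theta}}(\mathbf{x}_0^{(n)},\mathbf{x}_T,T)-\mathbf{f}(\mathbf{x}_0,\mathbf{x}_T,T)\|_2\to 0$ and deduce that the pushforward of $p_T$ converges in distribution to $p_{\text{data}}$. Where you diverge is the upgrade from weak convergence to sup-norm convergence of the densities. The paper does this in one stroke by invoking the converse of Scheff\'e's theorem \cite{boos1985converse,sweeting1986converse}: weak convergence together with asymptotic uniform equicontinuity (a.u.e.c.) and uniform boundedness of the family --- which the authors read off from hypotheses (i) and (ii) --- yields $\|p_{\boldsymbol{\tilde\theta}}(\cdot)-p_{\text{data}}(\cdot)\|_\infty\to 0$. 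You instead try to manufacture pointwise convergence of the densities via a change of variables and then combine it with equicontinuity and tail control; in effect you are re-deriving the converse Scheff\'e argument by hand without naming it.

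The genuine gap sits in that last step, and it is the one you flag yourself: $\mathbf{f}_{\boldsymbol{\tilde\theta}}(\cdot,\mathbf{x}_T,T)$ need not be a diffeomorphism, so the change-of-variables formula for the pushforward density is unavailable, and the mollification fix is only gestured at --- you do not show that the smoothing error is controlled uniformly in $n$, nor that the mollified densities converge pointwise. The entire point of the converse Scheff\'e theorem is that no explicit formula for $p_{\boldsymbol{\tilde\theta}}$ is needed: weak convergence plus a.u.e.c.\ of the density family suffices. So the missing key lemma is precisely that theorem (or an equivalent statement), and your proof does not close without it. Be aware, though, that the paper's own justification of a.u.e.c.\ is itself thin: condition (i) is a Lipschitz bound on the maps $\mathbf{f}_{\boldsymbol{\theta}}$, and passing from Lipschitz regularity of the map to equicontinuity of the pushforward \emph{densities} is a nontrivial step --- essentially the same difficulty you ran into with the change of variables. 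If you adopt the paper's route, cite the converse Scheff\'e theorem explicitly and state the a.u.e.c.\ hypothesis on the densities directly rather than attempting to derive it from (i).
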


\begin{proof}
Based on Theorem 3.2, let $t = T$ we have that for sufficiently large $n$,
$$\sup _{\mathbf{x}_0}\left\|\mathbf{f}_{\Tilde{{\theta}}}\left(\mathbf{x}_0^{(n)}, \mathbf{x}_T, T\right)-\mathbf{f}\left(\mathbf{x}_0, \mathbf{x}_T, T\right)\right\|_2 \rightarrow 0
$$

which implies that 
$$\mathbf{f}_{\tilde{\theta}}\left(\mathbf{x}_0^{(n)}, \mathbf{x}_T, T\right)\rightarrow\mathbf{f}\left(\mathbf{x}_0, \mathbf{x}_T, T\right)$$
when $n$ is large enough. Then we conclude that the pushforward distribution of $\mathbf{x}_T$, say $p_{\boldsymbol{\Tilde{\theta}}}(\cdot)$, converges in distribution to the data distribution $p_{\text{data}(\cdot)}$. Since for all $\mathbf{x}, \mathbf{y} \in \mathbb{R}^D, t \sim \mathcal{U}[1, T]$, and $\boldsymbol{\theta}$,$\left\{\mathbf{f}_{\boldsymbol{\theta}}\right\}_{\boldsymbol{\theta}}$ is uniform Lipschitz
$\left\|\mathbf{f}_{\boldsymbol{\theta}}\left(\mathbf{x}, \mathbf{x}_t, t\right)-\mathbf{f}_{\boldsymbol{\theta}}\left(\mathbf{y}, \mathbf{y}_t, t\right)\right\|_2\leq L\left\|\mathbf{x}-\mathbf{y}\right\|_2, \quad$ $\left\{\mathbf{f}_{\boldsymbol{\theta}}\right\}_{\boldsymbol{\theta}}$ is asymptotically uniformly equicontinuous (a.u.e.c.). Additionally, $\left\{\mathbf{f}_{\boldsymbol{\theta}}\right\}_{\boldsymbol{\theta}}$ is uniform bounded in $\boldsymbol{\Tilde{\theta}}$. Thus, by the converse of Scheffé's theorem \cite{boos1985converse,sweeting1986converse}, it suggests $\left\|p_{\boldsymbol{\Tilde{\theta}}}(\cdot)-p_{\text {data }}(\cdot)\right\|_{\infty} \rightarrow 0$ as $n$ sufficiently large.
\end{proof}

\section{Psuedo LPIPS Metric}

\label{sec:loss}
Assessment of image quality becomes increasingly crucial in image generation. The Learned Perceptual Image Patch Similarity (LPIPS, \citet{lpips}) metric has been a significant tool to help improve the quality of generated images. However, the LPIPS is still not robust enough to outliers in practice. Inspired by \citet{iCT} recent using the pseudo-Huber loss to significantly improve the robustness of the training consistency model, we propose a modified version of the LPIPS metric, defined as Pseudo-LPIPS:
\begin{equation}
    \text{Pseudo-LPIPS} = \sqrt{\text{LPIPS} + c^2} - c
\end{equation}
where $c$ is an adjustable hyperparamter. Similar to the Pseudo-Huber metric, the inclusion of the term $c^2$ and the subsequent square root transformation in the Pseudo-LPIPS metric aim to provide a more balanced and perceptually consistent measure. This approach mitigates the overemphasis on larger errors and increases the sensitivity and accuracy of the metric in discerning perceptual differences in images.

The merits of the Pseudo-LPIPS Metric are as follows:
\begin{itemize}
\item Enhanced Sensitivity to Perceptual Differences: The modified metric is finely attuned to subtle perceptual variances, often missed by traditional metrics. This sensitivity is especially valuable in fields requiring high-precision image quality, like medical imaging or high-fidelity rendering.
\item Balanced Error Emphasis: It provides a more equitable emphasis across various error magnitudes, contrasting the $L_2$ norm's tendency to disproportionately penalize larger errors.
\item Adaptability: The incorporation of the constant $c$ allows for flexibility, making the metric versatile for different scenarios and datasets.
\item Improved Robustness: The metric is more resilient against outliers and anomalies due to the square root transformation, addressing a common flaw in the Pseudo-Huber loss.
\end{itemize}
When comparing this modified metric to the $L2$ norm and pseudo-Huber loss, the Modified LPIPS aligns more closely with human perceptual judgments. The $L2$ norm, while simple in its mathematical form, often fails to accurately represent human vision. The Pseudo-Huber loss, although attempting to merge the benefits of $L1$ and $L2$ norms, sometimes falls short in providing a balanced representation of perceptual quality. The Pseudo-LPIPS, through its nuanced formulation, effectively bridges these gaps, presenting a metric that is both perceptually meaningful and mathematically sound.

\begin{table}[t!]
\setlength\tabcolsep{0.01cm}
    \centering
    \caption{Comparing the quality of unconditional samples on CIFAR-10}
    \fontsize{8.5pt}{8.5pt}\selectfont
    \begin{tabular}{lccc}
    \toprule
         Method & NFE($\downarrow$) & FID($\downarrow$) & IS($\uparrow$) \\
         \\
         \multicolumn{4}{l}{\textbf{Fast samplers \& distillation for diffusion models}} \\
         \toprule
         DDIM \cite{ddim}& 10 & 13.36 & \\
         DPM-solver-fast \cite{dpm-solver}& 10 & 4.70 \\
         3-DEIS \cite{3deis}& 10 & 4.17 \\
         UniPC \cite{unipc}& 10 & 3.87 \\
         DFNO (LPIPS) \cite{DFNO} &1 &3.78 \\
         2-Rectified Flow \cite{rectified} &1 &4.85 &9.01\\
         Knowledge Distillation \cite{knowledge} & 1 & 9.36\\
         TRACT \cite{tract}&1 &3.78 \\
         &2 &3.32 \\
         Diff-Instruct \cite{instruct}&1 &4.53 &9.89 \\
         CD (LPIPS) \cite{consistency} &1 &3.55 &9.48 \\
         &2 &2.93 &9.75 \\
         
         \textbf{Direct Generation} \\
         \toprule
         Score SDE \cite{score}&2000 &2.38 &9.83 \\
         Score SDE (deep) \cite{score}  &2000 &2.20 &9.89 \\
DDPM \cite{ddpm} &1000 &3.17 &9.46 \\
LSGM  \cite{LSGM}&147 &2.10 \\
PFGM  \cite{pfgm} &110 &2.35 &9.68 \\
EDM   \cite{edm}&35  &2.04 &9.84 \\
EDM-G++ \cite{edmG}&35 &1.77 \\
NVAE  \cite{nvae}&1 &23.5 &7.18 \\
BigGAN  \cite{bigGAN}&1 &14.7 &9.22 \\
StyleGAN2 \cite{style2}&1 &8.32 &9.21 \\
StyleGAN2-ADA \cite{stylegan2-ada}  &1 &2.92 &9.83 \\
CT (LPIPS) \cite{consistency}&1 &8.70 &8.49 \\
&2 &5.83 &8.85 \\
iCT \cite{iCT} &1 &2.83 &9.54 \\
&2 &2.46 &9.80 \\
iCT-deep \cite{iCT} &1 & 2.51 & 9.76 \\
&2 &2.24 &9.89 \\
\textbf{DDDM(T=1000)} &1 &2.90 &9.81 \\
&2 &2.79 &9.89 \\
&1000 &1.87 &9.94 \\

\textbf{DDDM(T=8000)} &1 &2.82 &9.83 \\
&2 &2.53 &9.84 \\
&1000 &1.74 &9.93 \\

\textbf{DDDM-deep(T=1000)} &1 &2.57 &9.91 \\
&2 &2.33 &9.91 \\
&1000 &1.79&9.95\\
\bottomrule
         
    \end{tabular}
    
    \label{tab:cifar}
\end{table}

\begin{table}[t!]
\setlength\tabcolsep{0.01cm}
    \centering
    \caption{Comparing the quality of class-conditional samples on ImageNet 64x64}
    \fontsize{8.5pt}{8.5pt}\selectfont
    \begin{tabular}{lcccc}
    \toprule
         Method & NFE($\downarrow$) & FID($\downarrow$) & Prec.($\uparrow$) & Rec.($\uparrow$)\\
         \\
         \multicolumn{5}{l}{\textbf{Fast samplers \& distillation for diffusion models}} \\
         \toprule
        DDIM \cite{ddim}&50 &13.7 &0.65 &0.56 \\
        &10 &18.3 &0.60 &0.49 \\
    DPM solver \cite{dpm-solver} &10 &7.93 \\
    &20 &3.42 \\
DEIS \cite{3deis}&10 &6.65\\
&20 &3.10 \\
DFNO (LPIPS) \cite{DFNO} &1 &7.83 &0.61 \\
TRACT \cite{tract}&1 &7.43 \\
&2 &4.97 \\
BOOT \cite{boot}&1 &16.3 &0.68 &0.36 \\
Diff-Instruct \cite{instruct} &1 &5.57\\

PD (LPIPS) \cite{consistency}  &1 &7.88 &0.66 &0.63 \\
&2 &5.74 &0.67 &0.65 \\
&4 &4.92 &0.68 &0.65 \\
CD (LPIPS) \cite{consistency}  &1 &6.20 &0.68 &0.63 \\
&2 &4.70 &0.69 &0.64 \\
&3 &4.32 &0.70 &0.64 \\
         
         \textbf{Direct Generation} \\
        \toprule
RIN \cite{rin} &1000 &1.23 \\
DDPM  \cite{ddpm}&250 &11.0 &0.67 &0.58 \\
iDDPM \cite{iddpm}&250 &2.92 &0.74 &0.62\\
ADM  \cite{ADM}&250 &2.07 &0.74 &0.63 \\
EDM  \cite{edm}&511 &1.36 \\
BigGAN-deep \cite{bigGAN} &1 &4.06 &0.79 &0.48 \\
CT (LPIPS) \cite{consistency} &1 &13.0 &0.71 &0.47 \\
&2 &11.1 &0.69 &0.56 \\
iCT \cite{iCT}  &1 &4.02 &0.70 &0.63 \\
&2 &3.20 &0.73 &0.63 \\
iCT-deep \cite{iCT}  &1 &3.25 &0.72 &0.63 \\
&2 &2.77 &0.74 &0.62 \\

\textbf{DDDM(T=1000)} &1&4.21&0.71&0.64\\
&2&3.53&0.73&0.64\\
&1000 &2.76 &0.75 &0.65 \\

\textbf{DDDM-deep(T=1000) } &1&3.47&0.71&0.63\\
&2&3.08&0.74&0.66 \\
&1000 &2.11&0.73&0.67 \\
\bottomrule
    \end{tabular}
    
    \label{tab:imagenet}
\end{table}

\section{Experiments}
To evaluate our method for image generation, we train several DDDMs on CIFAR-10 \cite{CIFAR} and ImageNet 64x64 \cite{imagenet} and benchmark their performance with
competing methods in the literature. Results are
compared according to Frechet Inception Distance (FID, \citet{FID}), which is computed between 50K generated samples and the whole training set. We also employ Inception Score (IS,
\citet{IS}) and Precision/Recall \cite{Precision} to measure sample quality.

\begin{figure*}[t!]
 \centering
\begin{subfigure}[t]{0.47\textwidth}
  
  \includegraphics[scale=0.45]{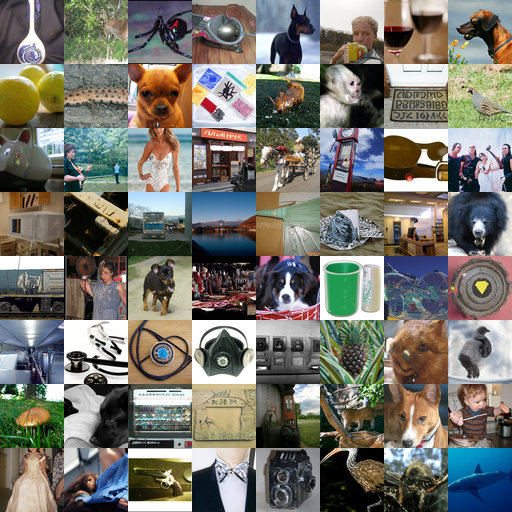}  
  \caption{one-step}
  \label{fig:image_1-step}
\end{subfigure}
\hspace*{0.2cm}
\begin{subfigure}[t]{0.47\textwidth}
  
  \includegraphics[scale=0.45]{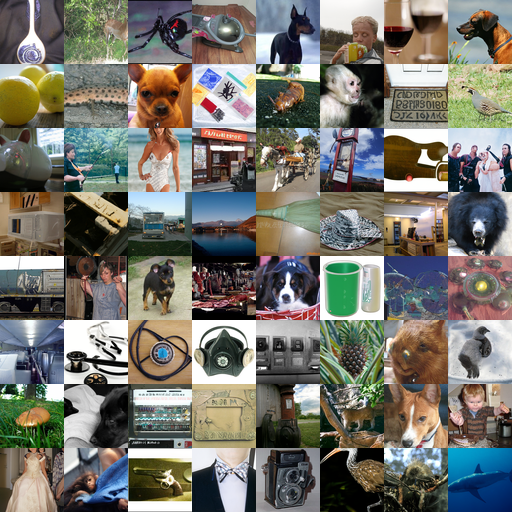}  
  \caption{two-step}
  \label{fig:image_2-step}
\end{subfigure}
\caption{One-step and two-step samples from DDDM-deep model trained on ImageNet 64x64 }
\label{fig:ImageNet}
\end{figure*}

  
  

\subsection{Implementation Details}
\textbf{Architecture.} We use the U-Net architecture from ADM \cite{ADM} for both datasets. For CIFAR-10, we use a base channel dimension of 128, multiplied
by 1,2,2,2 in 4 stages and 3 residual blocks per stage. Dropout \cite{dropout} of 0.3 is utilized for this task. For ImageNet 64x64, we use a base channel dimension of 192, multiplied by 1,2,3,4 in 4 stages and 3 residual blocks per stage, which account for a total of 270M parameters. Following ADM, we employ cross-attention modules not only at the 16x16 resolution but also at the 8x8 resolution, through which we incorporate the conditioning image $\mathbf{x}_0^{(n)}$ into the network. We also explore deeper variants of these architectures by doubling the number of blocks at each resolution, which we name DDDM-deep. All models on CIFAR-10 are unconditional, and all models on ImageNet 64x64 are conditioned on class labels. 

\textbf{Other settings.} We use Adam for all of our experiments.
For CIFAR-10, we set $T=1000$ for baseline model and train the model for 1000 epochs with a constant learning rate of 0.0002 and batch size of 1024. We also explore models with larger $T$ values and longer training epochs. Details can be found in \cref{tab:higher}. For
ImageNet 64×64, we only investigate $T=1000$ due to time constraints and train the model for 520 epochs with a constant learning rate of 0.0001 and batch size of 1024. We use an exponential moving average (EMA) of the weights during training with a decay factor of 0.9999 for all the experiments. All models are trained on 8 Nvidia A100 GPUs.

\subsection{Ablations}
In this section, we ablate various metrics employed in the loss function. We evaluate the effectiveness of the proposed Pseudo-LPIPS metric by training several models with varying $c$ values and comparing the sample qualities with models trained with ${L}_1$, squared ${L}_2$, LPIPS, and Pseudo-Huber on CIFAR-10. As depicted by Figure \ref{fig:fid_all}, Pseudo-LPIPS outperforms ${L}_1$ and squared ${L}_2$ by a substantial margin. Given the same value of $c$, Pseudo-LPIPS exhibits notably superior results compared to Pseudo-Huber metrics. Figure \ref{fig:fid_c} shows hyperparameter $c$ in our proposed metric plays a significant role in sample quality. When $c = 0 $, Pseudo-LPIPS degrades to LPIPS and it is clear that Pseudo-LPIPS consistently outperforms LPIPS even when the value of $c$ varies in a relatively wide range. These findings collectively validate the effectiveness and robustness of our proposed metrics. 
\begin{table}[]
    \centering
    \begin{tabular}{c|c|c}
         T&Epochs&FID(one-step)  \\
         \toprule
         1000&1000&2.90\\
         2000&2000&2.86\\
         4000&4000&2.83\\
         8000&8000&2.82\\
    \end{tabular}
    \caption{DDDM with different training configurations on CIFAR10.}
    \label{tab:higher}
\end{table}

\begin{figure*}[tb]
 \centering
\begin{subfigure}[t]{0.45\textwidth}
  \centering
  \includegraphics[scale=0.85]{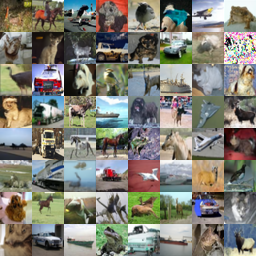}  
  \caption{one-step}
  \label{fig:cifar_1-step}
\end{subfigure}
\begin{subfigure}[t]{0.45\textwidth}
  \centering
  \includegraphics[scale=0.85]{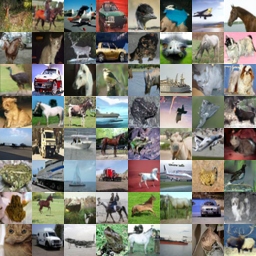}  
  \caption{two-step}
  \label{fig:ciar_2-step}
\end{subfigure}
\caption{One-step and two-step samples from DDDM-deep model trained on CIFAR-10}
\label{fig:Cifar}
\end{figure*}

\subsection{Comparison to SOTA} We compare our model against state-of-the-art generative models on CIFAR-10 and ImageNet 64x64. Quantitative results are summarized in Table \ref{tab:cifar} and Table \ref{tab:imagenet}. Our findings reveal that DDDMs exceed previous distillation diffusion models and methods that require advanced sampling procedures in both one-step and two-step generation on CIFAR-10 and ImageNet 64x64, which breaks the reliance on the well-pretrained diffusion models and simplifies the generation workflow. Moreover, our model demonstrates performance comparable to numerous leading generative models on both datasets. Specifically, baseline DDDM obtains FIDs of 2.90 and 2.79 for one-step and two-step generation on CIFAR-10, both results exceed that of StyleGAN2-ADA \cite{stylegan2-ada}. With deeper architecture and 1000-step sampling, DDDM-deep further reduces FID to 1.79, aligning with state-of-the-art method \cite{edmG}. It is worth noting that the leading few-step generation model iCT/iCT-deep \cite{iCT} is trained for 400k iterations, while our 
approach delivers competitive FIDs and higher IS scores under fewer training iterations. With $T=8000$ and trained for competitive epochs, DDDM achieves 2.82 and 1.74 on one-step and 1000-step generation respectively, both setting state-of-the-art performance. \\
On the ImageNet 64x64 dataset, DDDM attains FID scores of 4.21 and 3.53 for one-step and two-step generation, respectively. We have observed that iCT/iCT-deeper achieves superior results, benefitting from a 4x larger batch size and 1.6x more training iterations compared to our model. We hypothesize that the observed performance gap may be attributed to such computational resource disparities and suboptimal hyperparameters in our loss function. Despite these limitations, DDDM showcases improved precision and recall compared to iCT, demonstrating enhanced diversity and mode coverage while maintaining a similar model size.\\
The effectiveness of our iterative solution can also be clearly demonstrated by \cref{fig:fid_trend}. Overall, the FID consistently demonstrates a downward trend among different datasets and architectures, though it is not strictly monotonically decreasing with respect to the sampling steps.

\begin{figure}[h]
    \centering
    \includegraphics[scale=0.55]{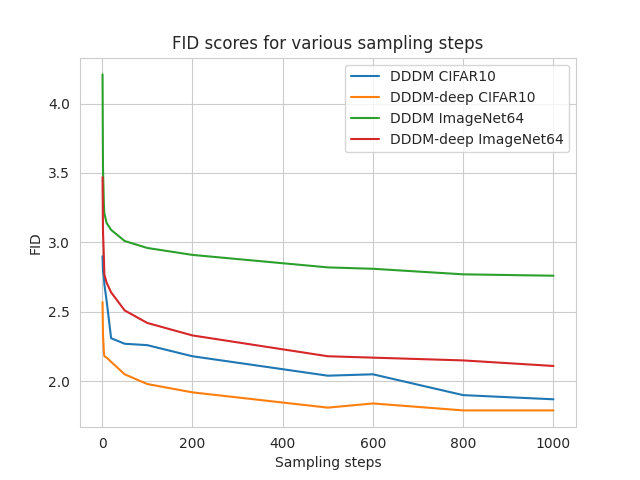}
    \caption{FID w.r.t inference iterations.}
    \label{fig:fid_trend}
\end{figure}

\section{Related Work}
The foundational work in diffusion probabilistic models (DPM) was initially conceptualized by \citeauthor{firstdiff} in 2015, where a generative Markov chain is developed to transfer the Gaussian distribution into the data distribution. Then \citet{ddpm} developed denoising diffusion probabilistic models (DDPM) and demonstrated their exceptional capabilities in image generation. By improving noise schedule and variance taking into consideration, \citeauthor{iddpm} further enhanced these models in 2021, achieving better log-likelihood scores and better FID scores. \citeauthor{score} focused on optimizing the score-matching objective and developed Noise Conditional Score Network (NCSN) \cite{iscore}. Despite their different motivations, DDPMs and NCSNs are closely related. Both DDPM and NCSN require many steps to achieve good sample quality and therefore have trouble generating high-quality samples in a few iterations. 

Many studies tried to reduce the sampling steps. DDIM \cite{ddim} has demonstrated effectiveness in few-step sampling, similar to the probability flow sampler. \citet{fastScore} examine fast stochastic differential equation integrators for reverse diffusion processes, and \citet{tzen2019theoretical} explore unbiased samplers conducive to fast, high-quality sampling. \cite{iddpm} and \citet{kong2021fast} describe methodologies for adapting discrete-time diffusion models. \citet{watson2021learning} have proposed a dynamic programming algorithm aimed at minimizing the number of timesteps required for a diffusion model, optimizing for log-likelihood.

Several studies have also shown the effectiveness of training diffusion models across continuous noise levels and subsequently tuning samplers post-training.  \citet{variational} involves adjusting the noise levels of a few-step discrete time reverse diffusion process. \citet{san2021noise} train a new network to estimate the noise level in noisy data, demonstrating how this estimate can expedite sampling. 

Modifications to the specifications of the diffusion model itself can also facilitate faster sampling. This includes altered forward and reverse processes, as studied by \citet{nachmani2021non} 
 and \citet{lam2021bilateral}. Moreover, Consistency models \cite{consistency} boost the sampling speed into a single step, in both consistency distillation (CD) and consistency training (CT). Later, by introducing several techniques, the one-step generation reaches state-of-the-art FID scores \cite{iCT}.

\section{Discussion and Limitations}
Since DDDM keeps track of $\mathbf{x}_0^{(n)}$ for each sample in the dataset, there will be additional memory consumption during training. Specifically, it requires extra 614MB for CIFAR10 and 29.5GB for ImageNet 64x64. Although it can be halved by using FP16 data type, such memory requirement might still be a challenge for larger dataset or dataset with high-resolution images.

We also notice that there could be bias in evaluation since the ImageNet is utilized in both LPIPS and Inception network for FID. The accidental leakage of ImageNet features from LPIPS may potentially lead to inflated FID scores. Other evaluation metrics such as human evaluations are needed to further validate our model. Furthermore, investigating unbiased loss for DDDM presents an interesting avenue for future research.

\section{Conclusion}
In conclusion, our presented DDDMs offer a straightforward and versatile approach to generating realistic images with minimal sampling steps and also support the iterative sampling process for better performance, eliminating the need for intricately designed samplers or distillation on pre-trained models. The core concept of our method involves conditioning the diffusion model on an estimated target generated from the previous training iteration, and during image generation, incorporating samples from previous timestep to guide the iterative process. Additionally, the incorporation of the proposed Pseudo-LPIPS enhances the robustness of our model, showcasing its potential for broader applications in other generative models. To further enhance the capabilities of DDDM and unleash its potential, we aim to leverage its strengths in a continuous-time setting.

\section*{Impact Statement }
This paper presents work whose goal is to advance the field of 
Machine Learning. There are many potential societal consequences 
of our work, none which we feel must be specifically highlighted here.

\section*{Acknowledgement}
Research primarily supported as part of the AIM for Composites, an Energy Frontier Research Center, an Energy Frontier Research Center funded by the U.S. Department of Energy (DOE), Office of Science, Basic Energy Sciences (BES), under Award \# DE-SC0023389 (Development of DDDM method) and by the US National Institute of Food and Agriculture (NIFA; Grant Number 2023-70029-41309) and the US National Science Foundation (NSF; Grant Number ABI-1759856, MTM2-2025541, OIA-2242812) (DDDM implementation and theoretical proof). In addition, the authors acknowledge research support from Clemson University with a generous allotment of computation time on the Palmetto cluster.





\nocite{langley00}

\bibliography{example_paper}
\bibliographystyle{icml2024}

\newpage
\appendix
\onecolumn
\section{Appendix}
\subsection{Derivation of the definition of $\mathbf{f}\left(\mathbf{x}_0, \mathbf{x}_t, t\right)$}
\label{appen:1}
Starting with Eq. \eqref{equ:ode} we find that the integration of $\mathbf{x}_t$ from time $t$ to $0$ is given by:

$$\int_t^0 \frac{\mathrm{d} \mathbf{x}_s}{\mathrm{~d} s} \mathrm{~d} s =\int_t^0-\frac{1}{2} \beta(s)\left[\mathbf{x}_s-\nabla_{x_s} \log q_s\left(\mathbf{x}_s\right)\right] \mathrm{d} s
$$

Thus,
$$
    \mathbf{x}_0 - \mathbf{x}_t = - \int_t^0\frac{1}{2} \beta(s)\left[\mathbf{x}_s-\nabla_{x_s} \log q_s\left(\mathbf{x}_s\right)\right] \mathrm{d} s
$$

Identifying the right-hand side of this equation as a function of $\left(\mathbf{x}_0, \mathbf{x}_t, t\right)$ allows us to introduce the $\mathbf{F}(\mathbf{x}_0, \mathbf{x}_t, t)$. Consequently, it can be reformulated as:

$$\mathbf{x}_0-\mathbf{x}_t=-\mathbf{F}\left(\mathbf{x}_0, \mathbf{x}_t, t\right) \Longrightarrow \mathbf{x}_0=\mathbf{x}_t-\mathbf{F}\left(\mathbf{x}_0, \mathbf{x}_t, t\right),$$

leading to the definition:
$$\mathbf{f}\left(\mathbf{x}_0, \mathbf{x}_t, t\right)=\mathbf{x}_t-\mathbf{F}\left(\mathbf{x}_0, \mathbf{x}_t, t\right).$$

In our case, $\mathbf{f}\left(\mathbf{x}_0, \mathbf{x}_t, t\right) =  \mathbf{x}_0\left(\mathbf{x}_t, t\right)$ suggests that $\mathbf{x}_0$ is a function of $\mathbf{x}_t$ and $t$, but it is embedded within a larger function $f$ that equates to $\mathbf{x}_0$. This setup implies an implicit relationship between $\mathbf{x}_t, t$, and $\mathbf{x}_0$.
In our implicit case, no direct expression is present, and we often cannot isolate one of the variables on one side of the equation without involving the others. Thus, it lets the neural network function estimate all the unstable parts. 

In contrast, in DDPM, $\mathbf{x}_0$ is approximated as $\hat{\mathbf{x}}_0=\left(\mathbf{x}_t-\sqrt{1-\bar{\alpha}_t} \epsilon_\theta\left(\mathbf{x}_t, t\right)\right) / \sqrt{\bar{\alpha}_t}$, presenting a partially explicit framework for relating $(\mathbf{x}_t, t)$ to $\mathbf{x}_0$. This equation, though it provides a method to estimate $\mathbf{x}_0$, highlights the potential for numerical instability. The division by $\sqrt{\bar{\alpha}_t}$ can amplify errors in estimating the noise $\epsilon_\theta\left(\mathbf{x}_t, t\right)$, especially as $\bar{\alpha}_t$ becomes small, which is typical in the latter stages of the reverse process where the data is more significantly noised.


\newpage
\section{Additional Samples}
In this section, we provide additional samples from our models.


\begin{figure}[h]

  \centering
  \includegraphics[scale=0.58]{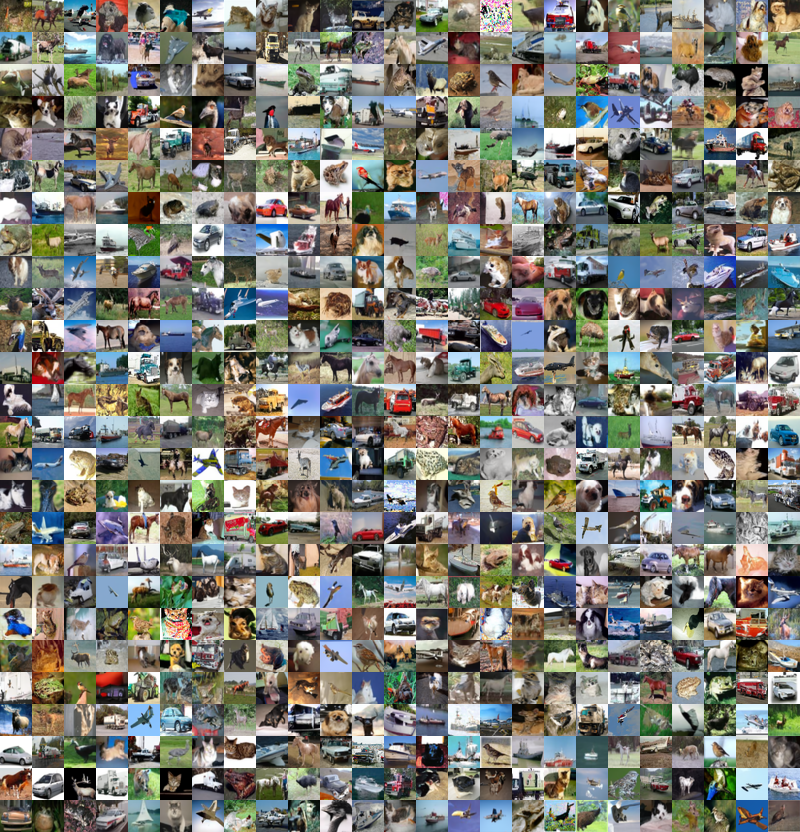}  
  \caption{One-step samples from DDDM model trained on CIFAR-10}
  \label{fig:1}

\end{figure}
\begin{figure}[h]

  \centering
  \includegraphics[scale=0.58]{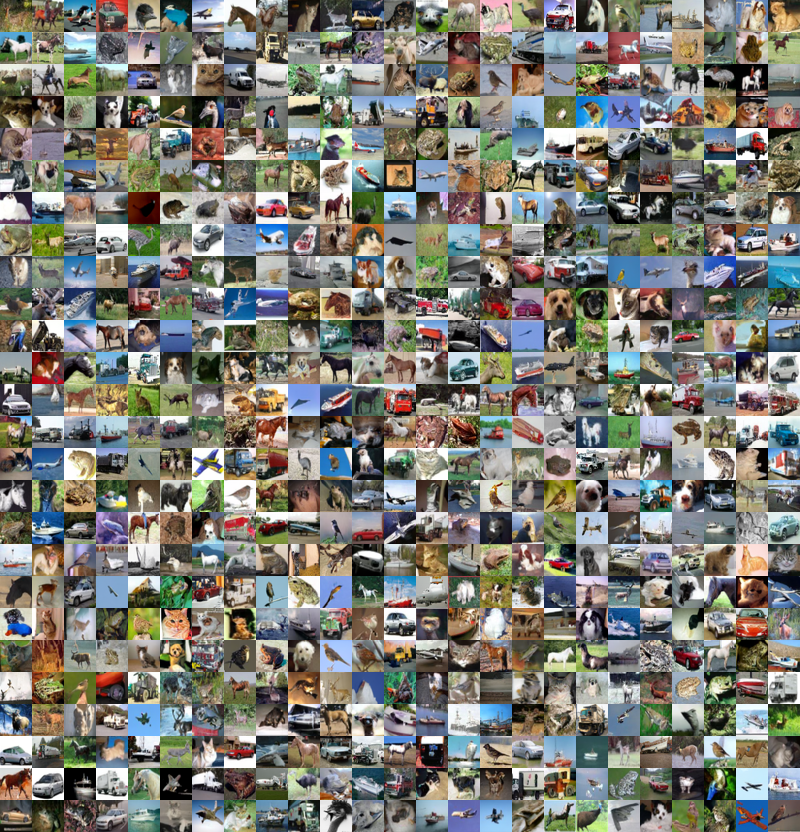}  
  \caption{Two-step samples from DDDM model trained on CIFAR-10}
  \label{fig:2}

\end{figure}
\begin{figure}[h]

  \centering
  \includegraphics[scale=0.58]{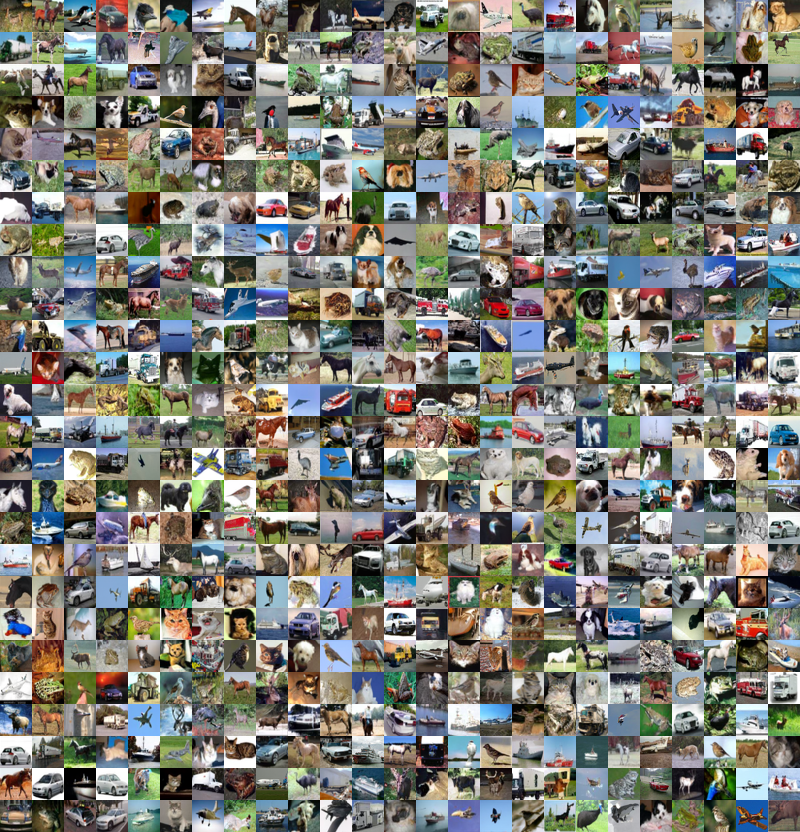}  
  \caption{1000-step samples from DDDM model trained on CIFAR-10}
  \label{fig:3}

\end{figure}
\begin{figure}[h]

  \centering
  \includegraphics[scale=0.58]{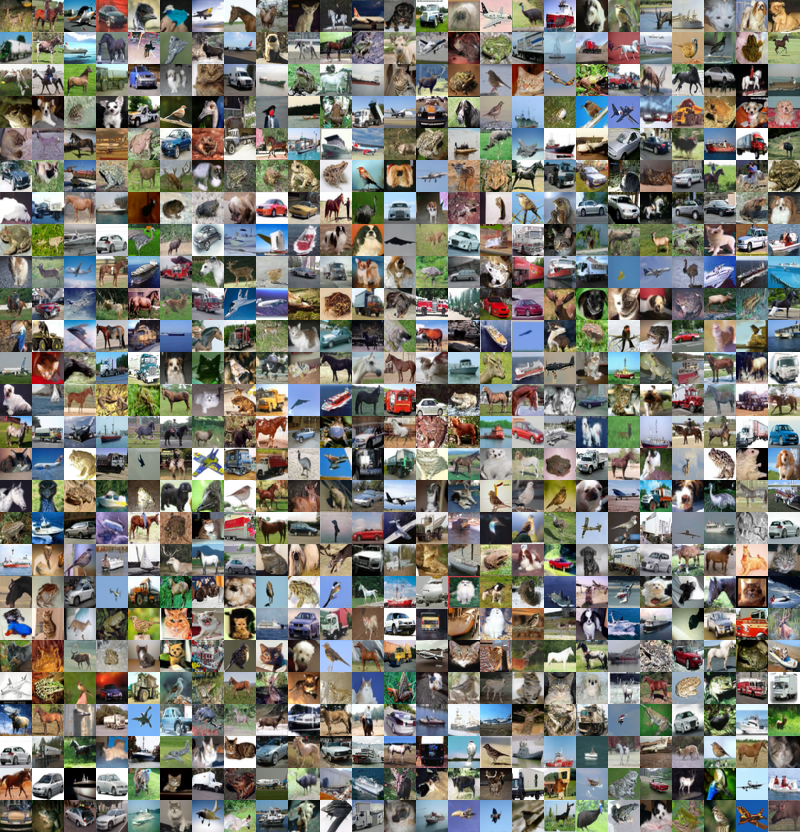}  
  \caption{One-step samples from DDDM-deep model trained on CIFAR-10}
  \label{fig:4}

\end{figure}
\begin{figure}[h]

  \centering
  \includegraphics[scale=0.58]{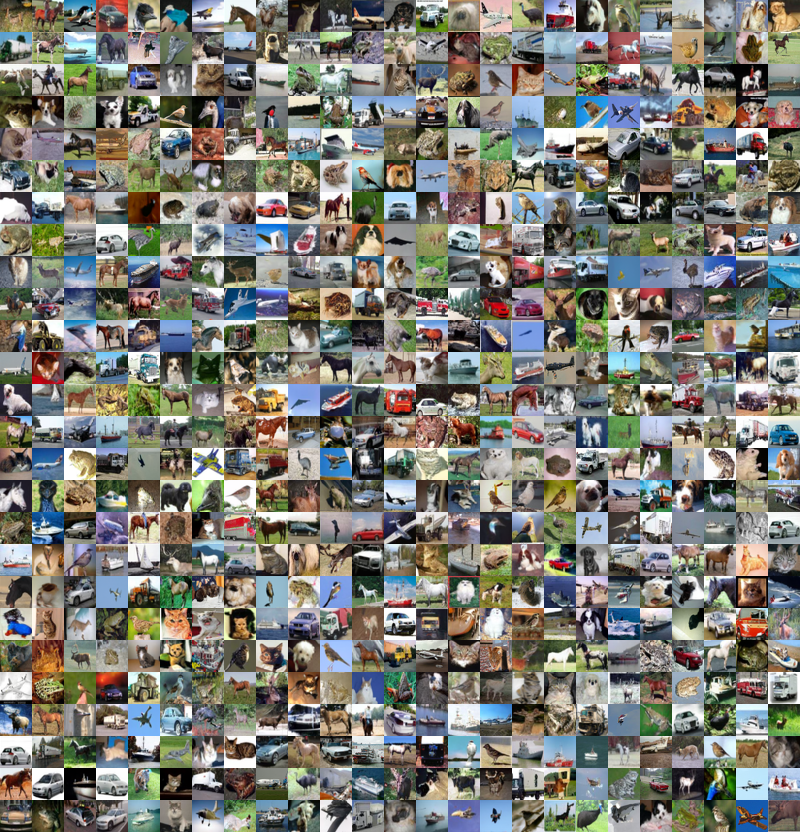}  
  \caption{Two-step samples from DDDM-deep model trained on CIFAR-10}
  \label{fig:5}

\end{figure}
\begin{figure}[h]

  \centering
  \includegraphics[scale=0.58]{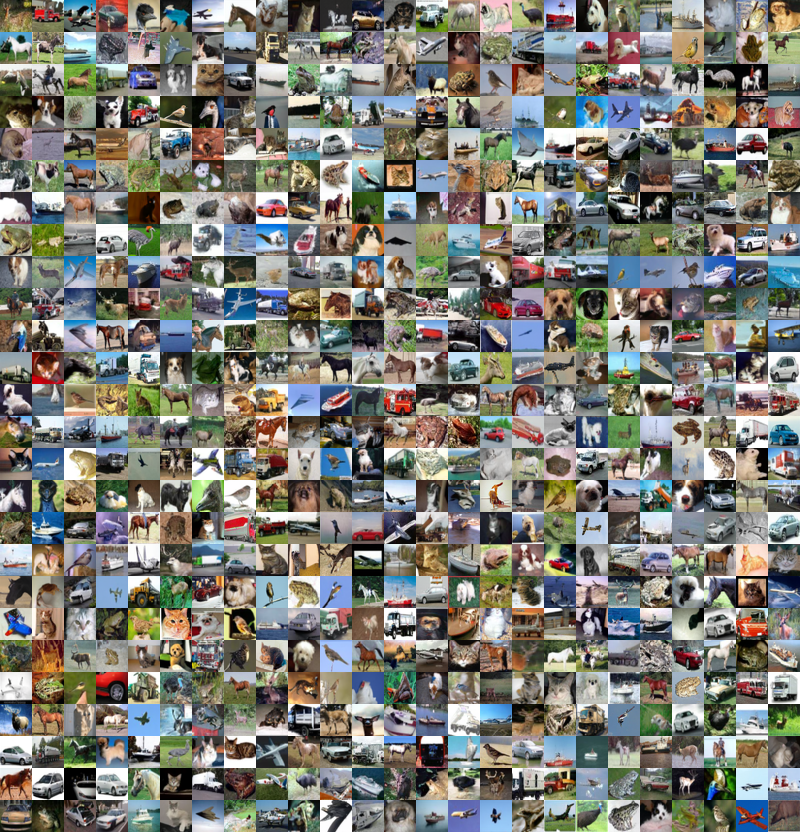}  
  \caption{1000-step samples from DDDM-deep model trained on CIFAR-10}
  \label{fig:6}

\end{figure}

\begin{figure}[h]

  \centering
  \includegraphics[scale=0.65]{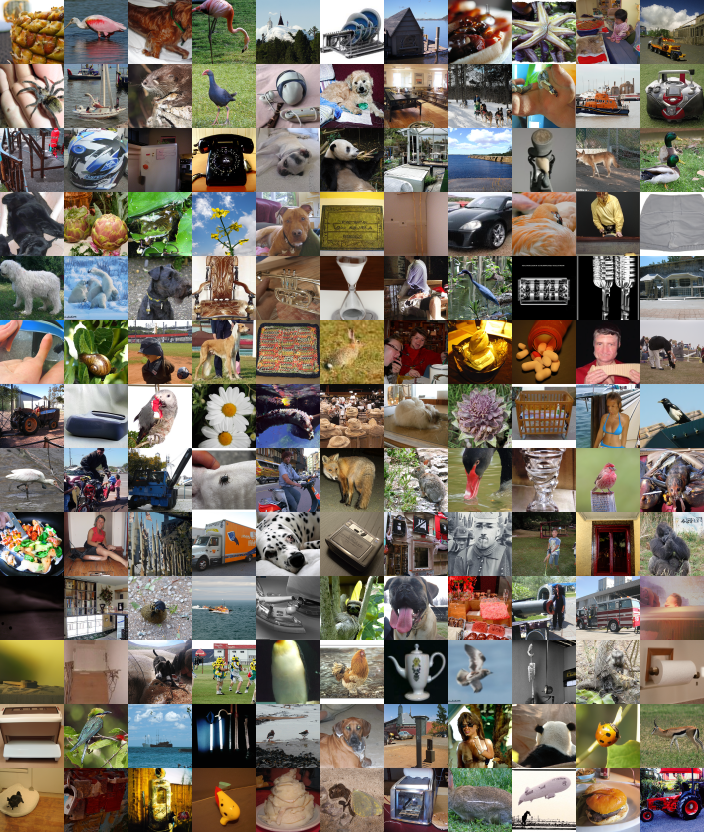}  
  \caption{one-step samples from DDDM model trained on ImageNet 64x64}
  \label{fig:7}

\end{figure}

\begin{figure}[h]

  \centering
  \includegraphics[scale=0.65]{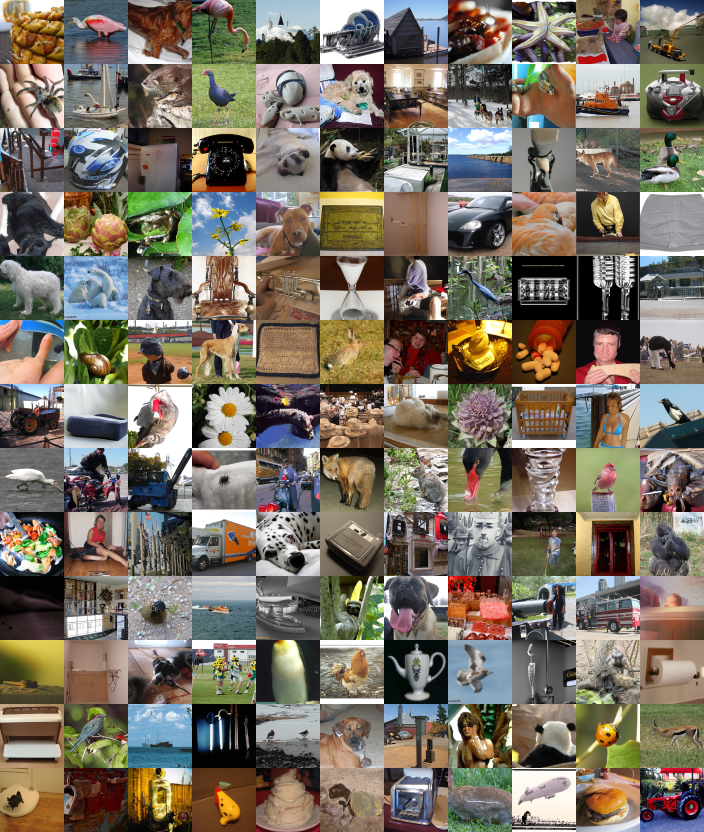}  
  \caption{two-step samples from DDDM model trained on ImageNet 64x64}
  \label{fig:8}

\end{figure}

\begin{figure}[h]

  \centering
  \includegraphics[scale=0.65]{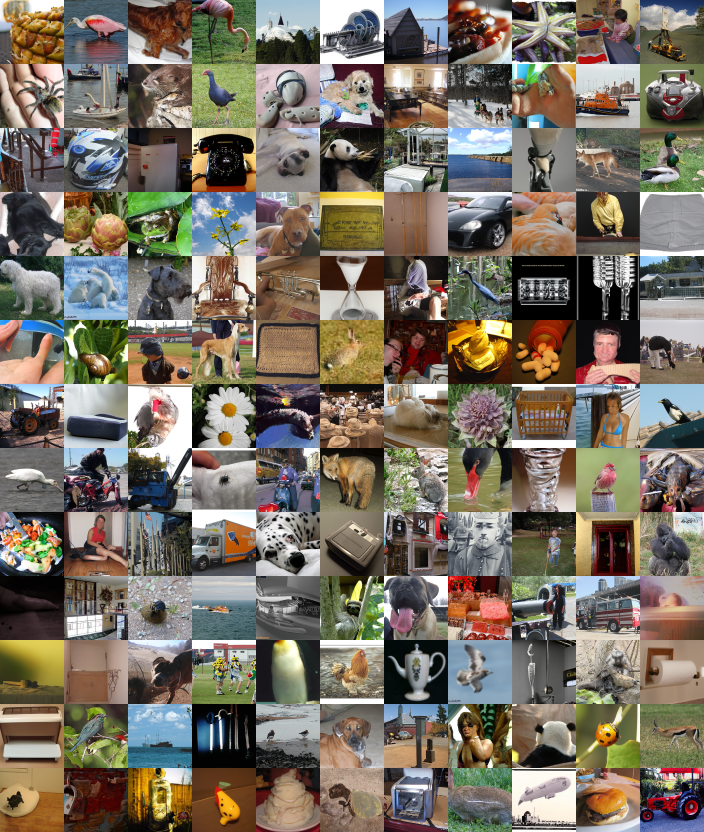}  
  \caption{1000-step samples from DDDM model trained on ImageNet 64x64}
  \label{fig:9}

\end{figure}

\begin{figure}[h]

  \centering
  \includegraphics[scale=0.65]{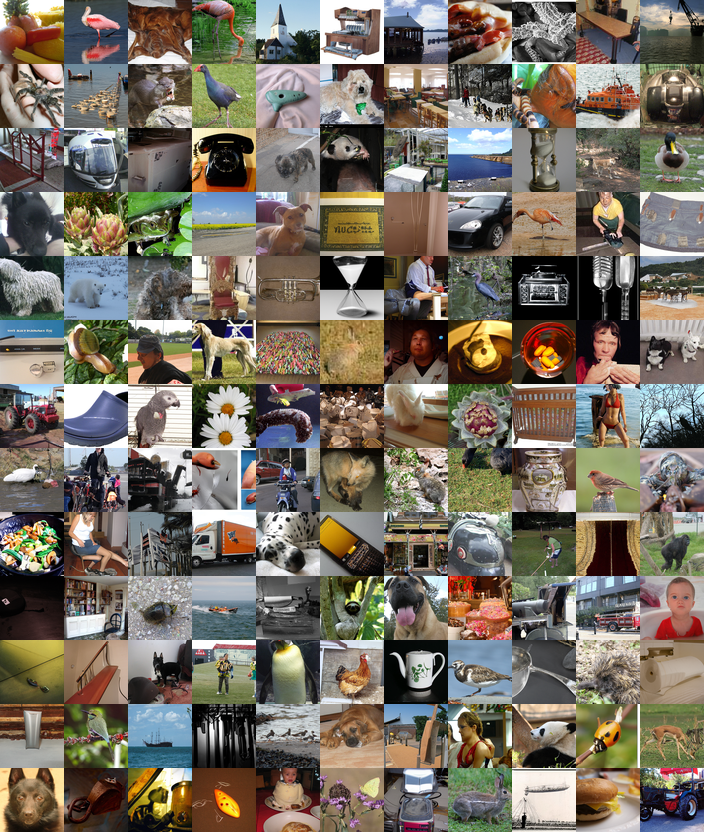}  
  \caption{one-step samples from DDDM-deep model trained on ImageNet 64x64}
  \label{fig:10}

\end{figure}

\begin{figure}[h]

  \centering
  \includegraphics[scale=0.65]{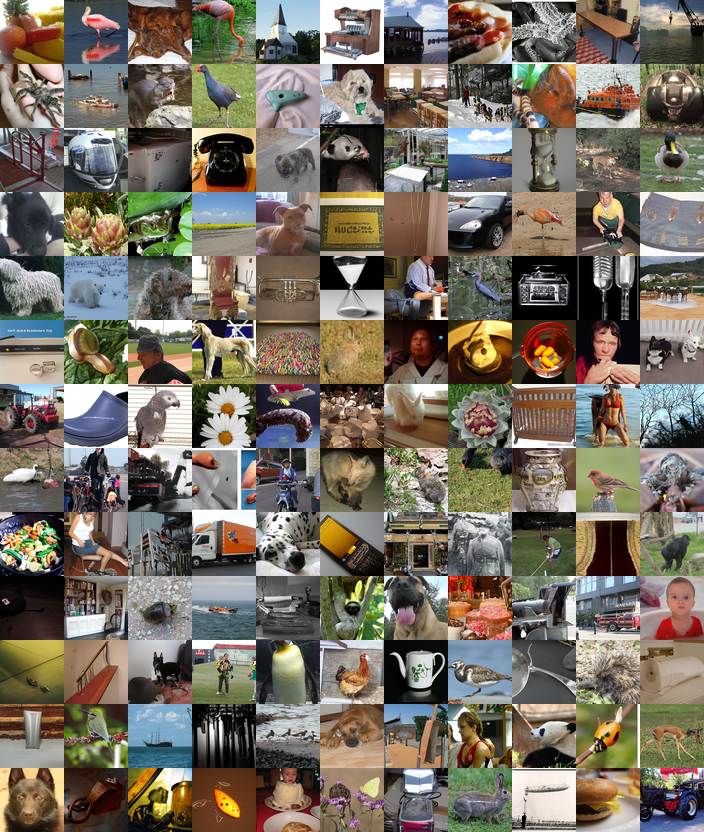}  
  \caption{two-step samples from DDDM-deep model trained on ImageNet 64x64}
  \label{fig:11}

\end{figure}

\begin{figure}[h]

  \centering
  \includegraphics[scale=0.65]{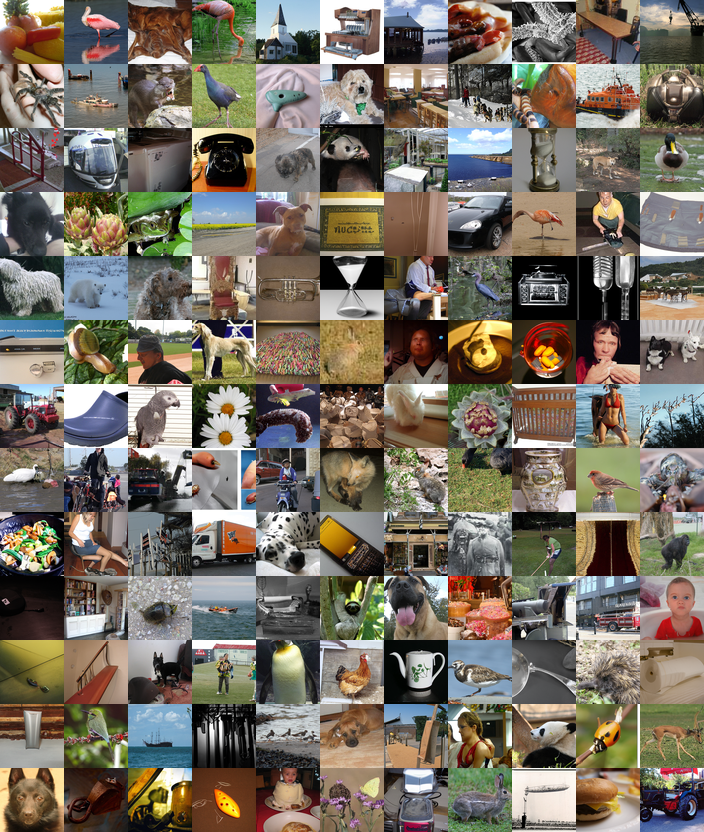}  
  \caption{1000-step samples from DDDM-deep model trained on ImageNet 64x64}
  \label{fig:12}

\end{figure}

\end{document}